\def\ARXIVVERSION{1}
\documentclass{article}
\usepackage{iclr2027_conference,times}
\usepackage{etoolbox}

\iclrfinalcopy
\makeatletter
\patchcmd{\@maketitle}
  {\lhead{Published as a conference paper at ICLR 2027}}
  {\lhead{}}
  {}
  {\PackageError{softsat-arxiv}{Could not neutralize the ICLR publication header}{}}
\makeatother

\usepackage{amsmath,amsfonts,bm}

\def\eqref#1{equation~\ref{#1}}

\def\1{\bm{1}}

\DeclareMathAlphabet{\mathsfit}{\encodingdefault}{\sfdefault}{m}{sl}
\SetMathAlphabet{\mathsfit}{bold}{\encodingdefault}{\sfdefault}{bx}{n}

\usepackage{hyperref}
\hypersetup{hypertexnames=false}
\usepackage{url}
\usepackage{graphicx}

\usepackage{algorithm}
\usepackage{algorithmic}
\usepackage{amsmath}
\usepackage{amssymb}
\usepackage{amsthm}
\usepackage{booktabs}

\newtheorem{lemma}{Lemma}

\title{More Correct Mass, Worse Answers: Why Power Sampling Can Fail and How to Fix It}
\ifdefined\ARXIVVERSION
\author{
Haohui Yang\thanks{Equal contribution.} \&
Jiaxing Sun\footnotemark[1] \&
Xiujun Ma\thanks{Corresponding author.}\\[0.4em]
\small State Key Laboratory of General Artificial Intelligence, Peking University, Beijing, China\\
\small \texttt{\{yanghaohui,sunjiaxing\}@stu.pku.edu.cn}\\
\small \texttt{maxiujun@pku.edu.cn}
}
\date{}
\else
\author{Anonymous Authors}

\fi

\begin{document}

\maketitle

\begin{abstract}
Power Sampling sharpens a language model's distribution over complete generation trajectories, offering a verifier-free way to improve reasoning at inference time. It also has the potential to serve as a general-purpose front end for a broad range of downstream sampling methods. However, we uncover a striking paradox: Power Sampling can drive more probability mass toward correct trajectories while degrading the downstream inference it is intended to enhance. Using self-consistency as a representative case, we observe accuracy drops of up to 18.5 percentage points across models and reasoning benchmarks. We trace this paradox to two mismatches. Dose mismatch arises because a fixed exponent induces drastically different amounts of distributional change across problems. Coverage mismatch arises because global sharpening concentrates mass on a narrow set of dominant paths: high pass@k, often interpreted as evidence of preserved diversity, can therefore coexist with the loss of broad reasoning-path support required for downstream aggregation, search, and selection. Guided by this diagnosis, we replace uniform trajectory exponentiation with a deformation-controlled, support-preserving Power target that calibrates sharpening across problems while limiting the suppression of moderate-probability paths. In a same-budget instantiation with weighted self-consistency, the repaired sampler reverses the losses caused by global Power and outperforms standard multi-sample inference across reasoning benchmarks.
\end{abstract}


\section{Introduction}

Large language models have made substantial progress on complex tasks in mathematics, code generation, and scientific question answering \citep{lewkowycz2022quantitative,chen2021codex}. For much of this progress, the dominant recipe has been to scale the resources invested during training: model capacity, training data, and training compute \citep{kaplan2020scaling}. More recently, test-time scaling has emerged as a complementary axis for improving capability after training. Rather than modifying model parameters, it allocates additional inference compute to longer reasoning, search over possible solutions, or the generation of multiple candidate trajectories \citep{snell2025scaling,wu2025inference}. In doing so, it broadens the central scaling question from how to train a stronger model to how a trained model should spend computation when solving each problem.

Within this broader paradigm, repeated sampling offers a simple and broadly applicable strategy: it generates multiple reasoning trajectories and combines them through verification, selection, or aggregation \citep{chen2026modelswitch,khanh2026diverse}. Self-Consistency (SC), for example, selects the answer receiving the most support across independently sampled trajectories \citep{wang2023selfconsistency,oh2026latent}. Crucially, such methods rely on more than the presence of one strong trajectory. Their success also depends on whether the upstream distribution preserves the collective support on which the downstream decision is based \citep{tan2025selfconsistent,khanh2026diverse}.

This dependence makes the upstream trajectory distribution a central design choice. Power Sampling offers a verifier-free way to reshape that distribution: given a language model distribution $p$ over complete trajectories, it targets a sharpened distribution proportional to $p^\alpha$, favoring trajectories that the model already considers likely without additional training or a correctness verifier \citep{huang2025sharpening,karan2026reasoning}. Recent advances, including Scalable Power Sampling and Power-SMC, substantially reduce the cost of approximating this sequence-level target, making Power Sampling practical for large-scale inference \citep{ji2026scalable,azizi2026powersmc}. Because the target is independent of any particular downstream rule, it appears to be a natural front end for multi-trajectory reasoning.

\begin{figure*}[t]
    \centering
    \begin{minipage}[t]{0.242\textwidth}
        \centering
        \includegraphics[width=\linewidth]{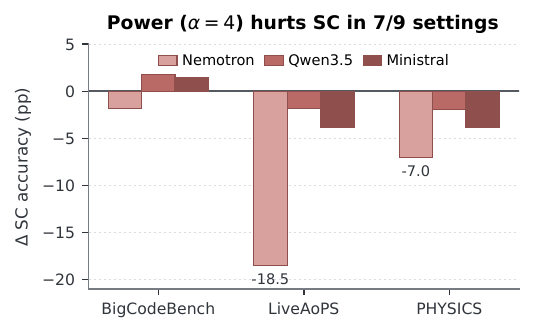}
        \vspace{-1mm}

        {\footnotesize\textbf{(a)} Downstream failure}
    \end{minipage}\hfill
    \begin{minipage}[t]{0.242\textwidth}
        \centering
        \includegraphics[width=\linewidth]{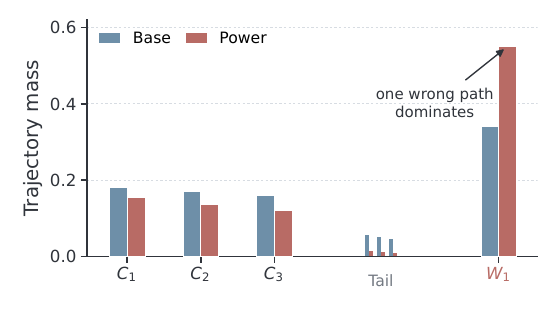}
        \vspace{-1mm}

        {\footnotesize\textbf{(b)} Coverage mismatch}
    \end{minipage}\hfill
    \begin{minipage}[t]{0.242\textwidth}
        \centering
        \includegraphics[width=\linewidth]{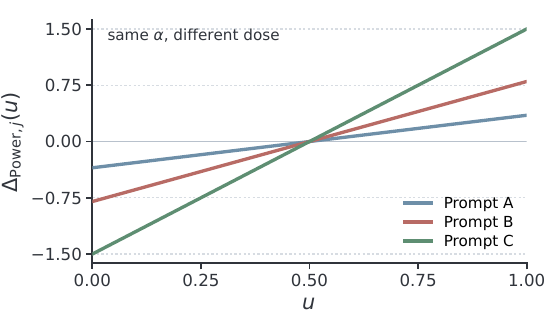}
        \vspace{-1mm}

        {\footnotesize\textbf{(c)} Dose mismatch}
    \end{minipage}\hfill
    \begin{minipage}[t]{0.242\textwidth}
        \centering
        \includegraphics[width=\linewidth]{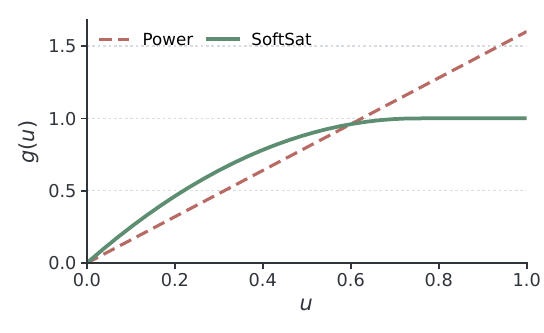}
        \vspace{-1mm}

        {\footnotesize\textbf{(d)} Relative-Rank SoftSat}
    \end{minipage}
    \caption{\textbf{Power sharpening can conflict with multi-trajectory aggregation, motivating a calibrated and support-preserving alternative.} Here $g$ denotes the additive log-weight gain in $q\propto p\exp(g)$.
    \textbf{(a)} Change in SC accuracy when Base SC8 is replaced by Power ($\alpha=4$) across three models and three benchmarks; seven of nine settings degrade.
    \textbf{(b)} Blue and red bars show Base and illustratively sharpened trajectory masses. $C_1$--$C_3$ support the same correct answer, $W_1$ supports a competing wrong answer, and Tail groups other low-mass paths; sharpening makes $W_1$ dominant.
    \textbf{(c)} Curves plot each prompt's centered Power gain $\Delta_{\mathrm{Power},j}(u)$ against the common within-prompt rank coordinate $u$ under a shared $\alpha$.
    \textbf{(d)} Repair schematic. On the same $u$, dashed red and solid green curves show a linearly increasing Power reference $g(u)$ and the saturating Relative-Rank SoftSat gain, respectively.}
    \label{fig:main-paradox}
\end{figure*}

However, this seemingly natural composition exposes a fundamental tension: assigning more probability to individually correct trajectories does not guarantee a better aggregated answer. Indeed, Power Sampling can move \emph{more} probability mass onto correct trajectories while weakening the answer-level support on which aggregation relies, producing \emph{worse} final answers. The downstream consequences are substantial. Across nine model--benchmark settings, fixed-exponent Power reduces downstream scores in seven, with a largest drop of 18.5 percentage points; notably, the degradation appears in all six LiveAoPSBench and PHYSICS settings shown in Figure~\ref{fig:main-paradox}(a). Thus, trajectory-level sharpening is not reliably aligned with the multi-trajectory decision it is intended to support.

Why does this misalignment arise? We identify two mismatches between global sharpening and downstream aggregation. First, \emph{coverage mismatch} occurs because Power rewards trajectories individually, whereas aggregation depends on their collective answer-level support. Figure~\ref{fig:main-paradox}(b) gives a minimal example: $C_1$--$C_3$ are distinct moderate-probability trajectories that map to the same correct answer, while $W_1$ is the individually strongest trajectory and maps to a competing wrong answer. Under Base, the correct answer wins through the combined support of $C_1$--$C_3$. Because Power exponentiates each trajectory before answer-level aggregation, it magnifies $W_1$'s individual advantage and suppresses the moderate correct paths; after renormalization, $W_1$ becomes dominant and the aggregated decision can flip. The low-mass Tail emphasizes that this failure is a loss of broadly distributed support, rather than the disappearance of every alternative path. Consequently, high pass@$k$ certifies that a correct trajectory remains reachable, but not that the distribution preserves the effective support required by aggregation, search, or selection \citep{karan2026reasoning,yue2025reasoningcapacity,tan2025selfconsistent,khanh2026diverse}. Second, \emph{dose mismatch} occurs because a fixed exponent applies the same algebraic transformation, not the same amount of distributional change. Since likelihood geometry varies across prompts, the same $\alpha$ can produce a mild adjustment on one problem but near-collapse on another. Figure~\ref{fig:main-paradox}(c) places prompts on the common rank coordinate $u$: prompt-specific likelihood spreads turn the same $\alpha$ into different centered gain slopes $\Delta_{\mathrm{Power},j}(u)$. Together, coverage mismatch explains why sharpening can damage aggregation, while dose mismatch explains why that damage is uneven and difficult to control.

This diagnosis suggests that a successful repair must control both the shape and strength of sharpening. We therefore introduce \emph{Relative-Rank SoftSat}. To address coverage mismatch, SoftSat promotes moderately ranked trajectories while saturating gains at the top, preventing a few dominant paths from absorbing disproportionate mass. To address dose mismatch, relative rank replaces raw likelihood with a common within-prompt coordinate in $[0,1]$, making the strength of reweighting more comparable across problems. Together, the two components favor strong trajectories without discarding the broader support required by downstream aggregation. Figure~\ref{fig:main-paradox}(d) makes this design response explicit through $g(u)$: on the same relative-rank coordinate, the dashed linear Power reference keeps growing, whereas SoftSat raises moderate ranks and saturates toward the top.

Finally, we exploit the candidate-level parallelism enabled by modern Power samplers \citep{ji2026scalable,azizi2026powersmc}. Rather than invoking Power-SMC independently $N$ times for $\mathrm{SC}_N$, we recast target sampling as importance-weighted aggregation over a shared Base candidate pool. The resulting estimator reuses the same $N$ trajectories as Base+$\mathrm{SC}_N$, while our analysis shows that it asymptotically preserves the decision of direct target sampling under a standard positive-margin condition. This same-budget realization therefore retains the computational appeal of parallel sampling while allowing the repaired target to guide downstream aggregation.

Our contributions are threefold:
\begin{enumerate}
    \item We show that Power Sampling can increase correct-trajectory mass yet degrade multi-trajectory decisions, and identify coverage mismatch and dose mismatch as the causes.
    \item We introduce Relative-Rank SoftSat, which replaces absolute likelihood with within-prompt rank to control dose and saturates gains on dominant trajectories to preserve coverage.
    \item We connect ideal target sampling to importance-weighted aggregation over a shared Base pool, enabling a low-cost realization that asymptotically preserves the target decision.
\end{enumerate}

\section{Related Work}

\subsection{Sequence-Level Distribution Sharpening}

Distribution sharpening reallocates probability toward outputs already favored by a model and has been studied as a mechanism underlying language-model self-improvement \citep{huang2025sharpening}. Within this view, Power Sampling operates at inference time by exponentiating complete-trajectory probabilities and approximating the resulting target with autoregressive Markov chain Monte Carlo \mbox{\citep{karan2026reasoning}}. Because iterative MCMC introduces substantial sequential latency, subsequent work has focused on making this target practical at inference time. Specifically, Scalable Power Sampling replaces iterative MCMC with an autoregressive approximation based on sampled continuations \citep{ji2026scalable}, whereas Power-SMC uses sequential importance weighting and resampling to advance a particle population in a GPU-friendly batched decode \citep{azizi2026powersmc}. By reducing the sequential bottleneck, these methods make fixed-exponent Power distributions substantially easier to sample from. Their primary focus, however, remains how to obtain Power samples efficiently; our work instead asks how the resulting target and parallel candidate computation should interact with a downstream multi-sample decision rule.

Despite this progress, existing evaluations largely characterize the resulting sampler through single-output accuracy and pass@$k$ \citep{karan2026reasoning,ji2026scalable}. Yet pass@$k$ establishes only whether a correct trajectory remains reachable; it does not describe how probability mass is distributed across reasoning paths or answer modes. This distinction matters for multi-sample inference, where limited output diversity can constrain Best-of-$N$ reasoning \citep{khanh2026diverse} and repeated generations may form a stable but incorrect consensus \citep{tan2025selfconsistent}. Accordingly, our work studies the downstream compatibility of sharpening and modifies the target deformation to preserve the support required by aggregation.

\subsection{Multi-Sample Consistency and Aggregation}

Multi-sample aggregation methods can be organized around two choices: how they define agreement and how much support each candidate contributes. The first choice determines which trajectories are treated as evidence for the same outcome. Self-Consistency groups reasoning trajectories by their normalized final answer and selects the largest group \citep{wang2023selfconsistency}, whereas CodeT extends consistency to programs by jointly evaluating generated code and tests through execution \citep{chen2023codet}. For outputs without a canonical short answer, Universal Self-Consistency asks an LLM to identify the most consistent response \citep{chen2023universal}; Latent Self-Consistency learns representations for majority-set selection across short- and long-form answers \citep{oh2026latent}; and ModeX identifies a dominant response through a similarity graph over complete generations \citep{choi2026modex}. Although these methods use different equivalence relations, all of their readouts depend on collective support across sampled candidates.

The second choice determines how much support each candidate contributes. Whereas standard SC assigns one vote per trajectory, CISC weights answers using model-reported confidence \citep{taubenfeld2025confidence}, RASC evaluates both rationale quality and answer consistency \citep{wan2025rasc}, and RPC combines answer frequency with internal sequence probability \citep{zhou2025bridging}. Dynamic distributional alignment instead adjusts token-level sampling temperature in response to the evolving answer distribution \citep{li2025dynamic}. Other approaches improve sampling efficiency by adapting the sample count or switching among complementary models \citep{wang2026optscale,chen2026modelswitch}. In contrast to confidence or correctness scores, our weights represent the probability mass assigned by an explicit target trajectory distribution. Consequently, that mass can be accumulated under answer equality, execution-based relations, or semantic similarity without changing the task-specific definition of consistency.

\section{Preliminaries}
\label{sec:preliminaries}

\subsection{Trajectory Distributions and Power Sampling}

Given a prompt, let $\tau=(\tau_1,\ldots,\tau_T)$ denote a complete generation trajectory; we omit prompt conditioning for clarity. The Base model assigns trajectory probability and log-likelihood
\begin{equation}
\begin{aligned}
p(\tau)&=\prod_{t=1}^{T}p(\tau_t\mid\tau_{<t}),\\
\ell(\tau)&=\log p(\tau)=\sum_{t=1}^{T}\log p(\tau_t\mid\tau_{<t}).
\end{aligned}
\label{eq:base-trajectory}
\end{equation}
Token-level temperature sampling modifies each next-token distribution separately. At temperature $\gamma>0$, it induces
\begin{equation}
p_{\gamma}(\tau)
=\prod_{t=1}^{T}
\frac{p(\tau_t\mid\tau_{<t})^{1/\gamma}}
{\sum_v p(v\mid\tau_{<t})^{1/\gamma}}.
\label{eq:token-temperature}
\end{equation}
Lowering $\gamma$ sharpens every prefix-conditioned decision. Power Sampling instead sharpens the distribution over complete trajectories. For $\alpha>1$, its ideal target is
\begin{equation}
q_\alpha(\tau)=\frac{p(\tau)^\alpha}{Z_\alpha},
\qquad
Z_\alpha=\sum_{\tau'}p(\tau')^\alpha.
\label{eq:power-target}
\end{equation}
The token-temperature distribution in \eqref{eq:token-temperature} and the Power target in \eqref{eq:power-target} are generally different because token-level normalization depends on the generated prefix. Power-SMC approximates the sequence-level target with $M$ particles and returns one approximate Power trajectory per run \citep{azizi2026powersmc}. Our analysis uses the ideal $q_\alpha$ as its reference and separates this target from finite-$M$ approximation error. The number of trajectories used by a downstream aggregation rule is denoted by $N$; it is distinct from the internal particle count $M$.

\subsection{From Trajectories to Consensus}

Let $A(\tau)$ be the normalized answer extracted from $\tau$, and let $R(\tau)\in\{0,1\}$ indicate whether the trajectory is correct. Any trajectory distribution $\mu$ induces answer mass and total correct-trajectory mass
\begin{equation}
Q_\mu(a)=\Pr_{\tau\sim\mu}[A(\tau)=a],
\qquad
C_\mu=\mathbb{E}_{\tau\sim\mu}[R(\tau)].
\label{eq:outcome-mass}
\end{equation}
When correctness is determined by a unique gold answer $a^\star$, $C_\mu=Q_\mu(a^\star)$; for code, correctness may instead be determined by execution. Given $N$ independently sampled trajectories, Self-Consistency selects
\begin{equation}
\widehat a_{\mathrm{SC}}
=\arg\max_a\sum_{i=1}^{N}\mathbf{1}\{A(\tau_i)=a\}.
\label{eq:self-consistency}
\end{equation}
For executable or open-ended outputs without a canonical answer string, ModeX instead derives consensus from a similarity graph over complete responses \citep{choi2026modex}. These readouts use different notions of agreement, but both convert support across multiple trajectories into a final decision.

\section{Mismatch Analysis}
\label{sec:mismatch}

A higher chance of sampling a correct trajectory need not lead to a better decision from multiple samples. To understand this gap, we ask two questions that pass@$k$ leaves unanswered: where does Power move probability mass within a problem, and does the same exponent move different problems by comparable amounts? These questions expose two complementary failure modes, which we call \emph{coverage mismatch} and \emph{dose mismatch}, as summarized in Figure~\ref{fig:mismatch-overview}. The figure uses one representative setting; the appendix reports the same diagnostics across all three evaluated models.

\begin{figure*}[t]
    \centering
    \includegraphics[width=0.98\textwidth]{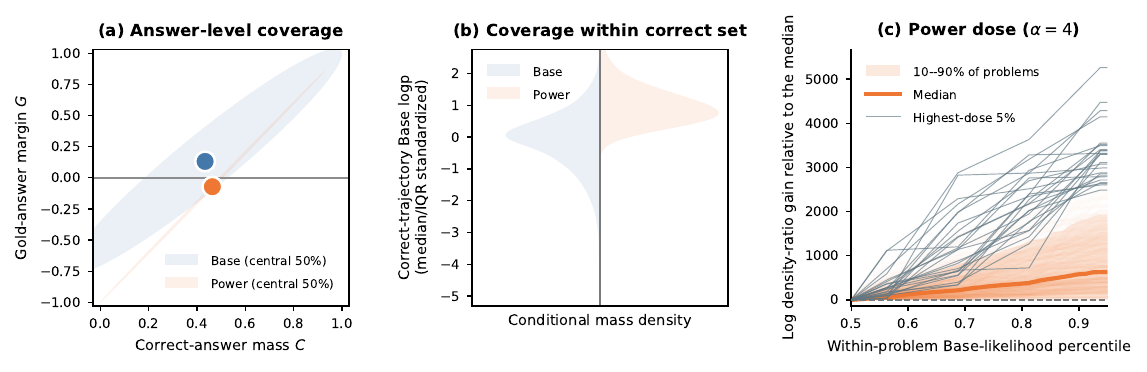}
    \caption{Three views of why Power sharpening need not improve downstream aggregation in a representative setting. (a) Power increases correct-answer mass $C$ while reducing the gold-answer margin $G$; filled markers denote across-problem centers and translucent ellipses summarize the central 50\% of problems. (b) Conditional on correctness, Power concentrates mass in a narrower, high-Base-likelihood region; the two densities are mirrored only for visual comparison. (c) A fixed $\alpha$ produces widely different within-problem log-weight gains; the shaded region spans the 10th--90th percentiles across problems, and blue-gray curves mark the 5\% of problems with the largest gain at the 90th likelihood percentile.}
    \label{fig:mismatch-overview}
\end{figure*}

\subsection{Coverage Mismatch}
\label{sec:coverage-mismatch}

Pass@$k$ captures whether at least one correct trajectory appears among $k$ samples. For independent draws from a trajectory distribution $\mu$,
\begin{equation}
\operatorname{pass@}k(\mu)
=1-\left(1-C_\mu\right)^k.
\label{eq:pass-k}
\end{equation}
Because this quantity depends only on $C_\mu$, it says nothing about how correct mass is divided among trajectories or how incorrect mass is organized across competing answers. A high pass@$k$ can therefore coexist with a smaller answer margin $G_\mu$.

Figure~\ref{fig:mismatch-overview}(a) contrasts total correct mass with the correct answer's margin over its strongest competitor. Under Power, the problem distribution shifts toward larger correct mass but a smaller margin: the strongest incorrect answer gains still more mass. The answer-level view nevertheless leaves open where the additional correct mass goes. Figure~\ref{fig:mismatch-overview}(b) resolves this mass along within-problem standardized Base log-likelihood, with problems weighted equally. Power concentrates it in a narrower, higher-likelihood region of the correct set.

The two panels expose different aspects of coverage: panel (a) concerns competition between answers, whereas panel (b) concerns concentration within the correct set. The latter motivates examining response-level readouts such as ModeX, but does not by itself establish a change in response modes or in the winner of answer-based SC. Together, the two views show why total correct mass alone is insufficient to characterize the support available to aggregation.

\subsection{Dose Mismatch}
\label{sec:dose-mismatch}

Coverage mismatch concerns where mass moves. A separate failure arises because a fixed $\alpha$ does not impose a comparable deformation across problems. Power acts on raw sequence-level likelihood rather than a problem-relative scale. Its relative reweighting of two trajectories satisfies
\begin{equation}
\log
\frac{p_j(\tau_i)^{\alpha-1}}
     {p_j(\tau_k)^{\alpha-1}}
=(\alpha-1)\bigl[\ell_j(\tau_i)-\ell_j(\tau_k)\bigr].
\label{eq:power-relative-logweight}
\end{equation}
Although $\alpha$ is shared across problems, the likelihood gap is not. Narrow gaps yield mild reweighting, whereas wide gaps can make the same exponent sharply concentrate the distribution.

To place these deformations on a common trajectory coordinate, let $L_j(u)$ be the $u$-quantile of Base log-likelihood for problem $j$. The Power log-weight gain at that position relative to the within-problem median is
\begin{equation}
\Delta_{\mathrm{Power},j}(u)
=(\alpha-1)\bigl[L_j(u)-L_j(1/2)\bigr].
\label{eq:power-dose-curve}
\end{equation}
This aligns trajectories by their within-problem position while retaining the actual log-weight gain applied by Power.

This heterogeneity is not confined to selected trajectory pairs. If $\mathbb{E}_{\tau\sim p_j}[|\ell_j(\tau)|^3]<\infty$, then as $\alpha\downarrow1$, the full target obeys
\begin{equation}
D_{\mathrm{KL}}\!\left(q_{\alpha,j}\,\|\,p_j\right)
=\frac{(\alpha-1)^2}{2}
\operatorname{Var}_{\tau\sim p_j}[\ell_j(\tau)]
+O((\alpha-1)^3).
\label{eq:local-power-kl}
\end{equation}
Thus, problem-specific likelihood dispersion controls the leading-order movement of the entire Power target. We defer the exact identity and derivation to the supplementary material. Figure~\ref{fig:mismatch-overview}(c) makes the consequence visible: the curves fan outward under a fixed $\alpha$, with a small set of problems receiving markedly larger gains.

Coverage mismatch concerns where mass moves; dose mismatch concerns how far the distribution moves. Power couples both effects through a single global exponent, motivating a repair that controls them separately.

\section{Method}
\label{sec:method}

The preceding analysis points to two changes: likelihood must first be placed on a comparable within-problem scale, and the gain assigned on that scale must preserve useful coverage. Before defining these changes, we establish how a sequence-level target can be combined with a multi-sample readout without repeatedly running its sampler.

\subsection{From Power Sampling to Weighted Consensus}

A single Power-SMC run uses $M$ particles to return one approximate Power trajectory. A literal Power-SMC+$\mathrm{SC}_N$ composition therefore requires $N$ independent runs. We instead estimate the answer mass of the ideal Power target from one pool of $N$ Base trajectories. By \eqref{eq:power-target}, its unnormalized importance ratio is
\begin{equation}
r_\alpha(\tau)
=p(\tau)^{\alpha-1}
=\exp\!\left((\alpha-1)\ell(\tau)\right).
\label{eq:power-importance-ratio}
\end{equation}
For independent $X_{1:N}\sim p$ and $Y_{1:N}\sim q_\alpha$, consider
\begin{equation}
\begin{aligned}
\widehat Q_N^{\mathrm{weight}}(a)
&=\frac{\sum_{i=1}^{N}r_\alpha(X_i)\mathbf{1}\{A(X_i)=a\}}
{\sum_{i=1}^{N}r_\alpha(X_i)},\\
\widehat Q_N^{\mathrm{sample}}(a)
&=\frac{1}{N}\sum_{i=1}^{N}\mathbf{1}\{A(Y_i)=a\}.
\end{aligned}
\label{eq:two-answer-mass-estimators}
\end{equation}

\begin{lemma}[Asymptotic expectation equivalence]
\label{lem:power-expectation-equivalence}
For every fixed answer $a$,
\begin{equation}
\begin{aligned}
\lim_{N\to\infty}\mathbb{E}\!\left[\widehat Q_N^{\mathrm{weight}}(a)\right]
&=\lim_{N\to\infty}\mathbb{E}\!\left[\widehat Q_N^{\mathrm{sample}}(a)\right]\\
&=Q_{q_\alpha}(a).
\end{aligned}
\end{equation}
\end{lemma}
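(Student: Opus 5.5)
The sample-based side is immediate, so I would dispose of it first. Each $Y_i\sim q_\alpha$ is an independent draw, so linearity of expectation gives $\mathbb{E}[\widehat Q_N^{\mathrm{sample}}(a)]=\Pr_{\tau\sim q_\alpha}[A(\tau)=a]=Q_{q_\alpha}(a)$ for every $N$. The limit is therefore trivial, and all the work lies in the self-normalized importance estimator $\widehat Q_N^{\mathrm{weight}}(a)$.

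For that estimator I would write it as a ratio of two empirical means:
\[
\widehat Q_N^{\mathrm{weight}}(a)=\frac{\bar U_N}{\bar V_N},\qquad
\bar U_N=\frac1N\sum_{i=1}^N r_\alpha(X_i)\mathbf{1}\{A(X_i)=a\},\qquad
\bar V_N=\frac1N\sum_{i=1}^N r_\alpha(X_i).
\]
The first step is to identify the limits of numerator and denominator. By \eqref{eq:power-importance-ratio} and \eqref{eq:power-target}, $\mathbb{E}_p[r_\alpha(X)]=\sum_\tau p(\tau)^\alpha=Z_\alpha$. Similarly, $\mathbb{E}_p[r_\alpha(X)\mathbf{1}\{A(X)=a\}]=Z_\alpha\, Q_{q_\alpha}(a)$. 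Both quantities are finite, because $\alpha>1$ and $p(\tau)\le 1$ give $r_\alpha\le 1$; in particular $Z_\alpha\le\sum_\tau p(\tau)=1$. Also $Z_\alpha>0$ since $p$ is a probability distribution. The strong law of large numbers then gives $\bar U_N\to Z_\alpha Q_{q_\alpha}(a)$ and $\bar V_N\to Z_\alpha$ almost surely. By continuity of the ratio away from zero, $\widehat Q_N^{\mathrm{weight}}(a)\to Q_{q_\alpha}(a)$ almost surely.

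The step I expect to need the most care is passing from almost-sure convergence to convergence of expectations. A self-normalized importance estimator is biased for finite $N$, so a limit argument is required. Here the obstacle dissolves, because the estimator is a convex combination of indicators: $0\le \widehat Q_N^{\mathrm{weight}}(a)\le 1$ whenever the denominator is positive. The denominator is positive almost surely, since $r_\alpha(X_i)=p(X_i)^{\alpha-1}>0$ for any sampled $X_i$ (sampled trajectories have positive probability). Dominated convergence with the constant bound $1$ then yields $\mathbb{E}[\widehat Q_N^{\mathrm{weight}}(a)]\to Q_{q_\alpha}(a)$.

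Combining this with the first step proves the lemma. If the trajectory space were uncountable or the weights unbounded, I would instead check integrability of $r_\alpha$ under $p$ and keep the same dominated-convergence argument. Boundedness of the estimator is what makes that argument work, not boundedness of the weights.
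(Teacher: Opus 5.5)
Your proposal is correct and follows the paper's proof. Both write the estimator as a ratio of empirical means, apply the SLLN with $Z_\alpha>0$, and pass to expectations by bounded convergence using $0\le\widehat Q_N^{\mathrm{weight}}(a)\le 1$, with the sample side unbiased for every $N$. Your added checks, that $r_\alpha\le 1$ gives integrability and that the denominator is almost surely positive, supply details the paper leaves implicit.
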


\begin{lemma}[Approximation rate]
\label{lem:power-estimation-error}
If $0<q_\alpha(\tau)/p(\tau)\leq C$ almost surely for some finite $C$, then
\begin{equation}
\begin{aligned}
\mathbb{E}\!\left[\left(
\widehat Q_N^{\mathrm{weight}}(a)-\widehat Q_N^{\mathrm{sample}}(a)
\right)^2\right]&=O(N^{-1}),\\
\widehat Q_N^{\mathrm{weight}}(a)-\widehat Q_N^{\mathrm{sample}}(a)
&=O_p(N^{-1/2}).
\end{aligned}
\end{equation}
\end{lemma}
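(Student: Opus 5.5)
We will prove Lemma~\ref{lem:power-estimation-error} by inserting the common limit $Q:=Q_{q_\alpha}(a)$ and bounding the two deviations separately:
\begin{equation*}
\widehat Q_N^{\mathrm{weight}}(a)-\widehat Q_N^{\mathrm{sample}}(a)
=\bigl(\widehat Q_N^{\mathrm{weight}}(a)-Q\bigr)-\bigl(\widehat Q_N^{\mathrm{sample}}(a)-Q\bigr).
\end{equation*}
Then $(x-y)^2\le 2x^2+2y^2$ reduces the claim to showing that each squared deviation has expectation $O(N^{-1})$. Independence of $X_{1:N}$ and $Y_{1:N}$ is not needed for this step. The sample term is immediate: $\widehat Q_N^{\mathrm{sample}}(a)$ is a mean of $N$ i.i.d.\ Bernoulli$(Q)$ variables, so its mean squared error is $Q(1-Q)/N\le 1/(4N)$.

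For the weighted term, we first normalize the weights. Let $w(\tau)=q_\alpha(\tau)/p(\tau)=r_\alpha(\tau)/Z_\alpha$. The self-normalized estimator is unchanged when $r_\alpha$ is replaced by $w$, and by hypothesis $0<w\le C$ with $\mathbb{E}_p[w]=1$. Write
\begin{equation*}
\bar w=\frac1N\sum_{i=1}^N w(X_i),
\qquad
\bar f=\frac1N\sum_{i=1}^N w(X_i)\mathbf{1}\{A(X_i)=a\},
\end{equation*}
so that $\mathbb{E}\bar w=1$ and $\mathbb{E}\bar f=Q$. The key identity is
\begin{equation*}
\widehat Q_N^{\mathrm{weight}}(a)-Q=\frac{\bar f-Q\bar w}{\bar w}.
\end{equation*}
Its numerator is a mean of the i.i.d.\ centered terms $w(X_i)\bigl(\mathbf{1}\{A(X_i)=a\}-Q\bigr)$, each bounded in absolute value by $C$. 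Hence the numerator has second moment at most $C^2/N$.

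The main obstacle is the denominator $\bar w$, which can be small because $w$ is only bounded above, not below. To handle it, we split on the event $E=\{\bar w\ge 1/2\}$.
\begin{itemize}
\item On $E$, the squared error is at most $4(\bar f-Q\bar w)^2$, which contributes $O(C^2/N)$.
\item On $E^c$, the squared error is at most $1$, because both $\widehat Q_N^{\mathrm{weight}}(a)$ and $Q$ lie in $[0,1]$. So this part contributes at most $\Pr(\bar w<1/2)$.
\end{itemize}
Since $0<w\le C$, Hoeffding's inequality gives $\Pr(\bar w<1/2)\le\exp\!\bigl(-N/(2C^2)\bigr)=o(N^{-1})$. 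Chebyshev's inequality would give only $4C^2/N$, which also suffices. Combining the two cases, $\mathbb{E}\bigl[(\widehat Q_N^{\mathrm{weight}}(a)-Q)^2\bigr]=O(N^{-1})$, and therefore the full mean-square bound holds.

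The $O_p(N^{-1/2})$ statement then follows from Markov's inequality applied to the squared difference. For $D_N=\widehat Q_N^{\mathrm{weight}}(a)-\widehat Q_N^{\mathrm{sample}}(a)$ and any $M>0$,
\begin{equation*}
\Pr\bigl(\sqrt N\,|D_N|>M\bigr)\le \frac{N\,\mathbb{E}[D_N^2]}{M^2}\le \frac{K}{M^2}
\end{equation*}
for a constant $K$ independent of $N$. The right-hand side is made arbitrarily small by choosing $M$ large, which is exactly tightness of $\sqrt N\,D_N$.
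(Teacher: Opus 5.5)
Your proof is correct and follows the paper's argument essentially step for step: the self-normalized error identity $(\bar f-Q\bar w)/\bar w$, the split on $\{\bar w\ge 1/2\}$ with Hoeffding's bound $\exp(-N/(2C^2))$ on the complement, and Markov's inequality for the $O_p(N^{-1/2})$ claim. The only difference is how you combine the two deviations: you use $(x-y)^2\le 2x^2+2y^2$, whereas the paper uses independence of $X_{1:N}$ and $Y_{1:N}$ to drop the cross term, so your bound holds under any coupling of the two pools at the cost of a factor of $2$.
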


The proofs, including a finite-sample bound, are given in the appendix. If the answer space is finite and $Q_{q_\alpha}$ has a unique mode separated by a positive margin, both estimators consequently select the same mode with probability tending to one. This connection is deliberately made at the ideal-target level: it does not couple the result to the finite-$M$ approximation error of Power-SMC, nor does it turn a deterministically weighted Base pool into fresh target samples.

\subsection{Relative-Rank SoftSat}

Power reads the absolute gap in trajectory log-likelihood. To control dose across problems, we replace it with the within-problem rank
\begin{equation}
u_p(\tau)=\Pr_{X\sim p}\!\left[\ell(X)\leq\ell(\tau)\right]\in[0,1].
\label{eq:relative-rank}
\end{equation}
This coordinate depends only on a trajectory's likelihood ordering within the same problem. We then shape its gain with
\begin{equation}
\operatorname{SoftSat}_m(u)
=1-\left(1-\min\left\{\frac{u}{m},1\right\}\right)^2,
\qquad 0<m\leq1,
\label{eq:softsat}
\end{equation}
and define the corresponding reweighting multiplier
\begin{equation}
r_{\beta,m}(\tau)
=\exp\!\left[
\beta\operatorname{SoftSat}_m\!\left(u_p(\tau)\right)
\right].
\label{eq:softsat-importance-ratio}
\end{equation}
The resulting target is $q_{\beta,m}(\tau)\propto p(\tau)r_{\beta,m}(\tau)$. The importance ratios in \eqref{eq:power-importance-ratio} and \eqref{eq:softsat-importance-ratio} expose the repair directly in the weighting view. Power reads the raw coordinate $\ell(\tau)$, so the same coefficient $\alpha-1$ inherits problem-specific likelihood gaps; replacing it with $u_p(\tau)$ addresses dose mismatch. Power also applies a linear log-weight gain across that coordinate, whereas SoftSat raises the gain for moderate ranks and saturates it for $u\geq m$; this change of shape addresses coverage mismatch. Because the SoftSat potential lies in $[0,1]$, the target-to-Base density ratio is bounded, so the preceding approximation applies to this target as well.

\subsection{Finite-Pool Weights and Consensus Readout}

Given a Base pool $\tau_{1:N}$, we estimate the relative rank in
\eqref{eq:relative-rank} by its empirical counterpart,
\begin{equation}
\widehat u_i
=\frac{1}{N}\sum_{k=1}^{N}
\mathbf{1}\{\ell(\tau_k)\leq\ell(\tau_i)\}.
\label{eq:empirical-softsat-rank}
\end{equation}
We obtain the finite-pool multiplier by clipping the shaped gain,
\begin{equation}
\widetilde r_i
=\operatorname{clip}_{[1-\eta,\,1+\kappa\eta]}
\!\left(
\exp\!\left[\beta\operatorname{SoftSat}_m(\widehat u_i)+c\right]
\right).
\label{eq:bounded-softsat-ratio}
\end{equation}
The offset $c$ makes these multipliers mean one; their normalized values are
used as candidate weights $W_{1:N}$.

The weighted candidates are passed to a task-appropriate consensus readout
$\mathcal{D}$. For answer-based tasks, this gives weighted Self-Consistency,
\begin{equation}
\mathcal{D}_{\mathrm{SC}}(\tau_{1:N},W_{1:N})
=\arg\max_a\sum_{i=1}^{N}W_i\mathbf{1}\{A(\tau_i)=a\}.
\label{eq:weighted-sc}
\end{equation}
For free-form responses, we apply the same candidate weights throughout the
ModeX readout. Uniform weights recover the original readout in either case.

\begin{algorithm}[H]
\caption{Relative-Rank SoftSat with a Weighted Consensus Readout}
\label{alg:relative-rank-softsat}
\begin{algorithmic}[1]
\REQUIRE Prompt $x$, Base model $p_\theta$, budget $N$, readout $\mathcal{D}$, parameters $(m,\beta,\eta,\kappa)$
\ENSURE Consensus output $\widehat y$
\STATE Sample $\tau_{1:N}\sim p_\theta(\cdot\mid x)$
\STATE Rank $\ell(\tau_{1:N})$ within the pool to obtain $\widehat u_{1:N}$
\STATE Apply SoftSat and bounded mean-one rescaling to obtain $\widetilde r_{1:N}$
\STATE Normalize $W_i\leftarrow\widetilde r_i/\sum_j\widetilde r_j$
\RETURN $\widehat y\leftarrow\mathcal{D}(\tau_{1:N},W_{1:N})$
\end{algorithmic}
\end{algorithm}

Our method reuses the same $N$ Base trajectories
as uniform consensus, and therefore incurs the same model-generation cost with
no additional model calls.

\section{Experiments}

We evaluate whether Power's failure persists across response spaces and whether
SoftSat mitigates it without changing the Base candidate pool or generation
budget.

\subsection{Experimental Setup}

\paragraph{Benchmarks and models.}
Our evaluation spans code, mathematical reasoning, and physics. We use the Full-Instruct split of BigCodeBench (1,140 problems)~\citep{zhuo2025bigcodebench}, the April 2024 slice of LiveAoPSBench (498 problems)~\citep{mahdavi2025liveaops}, and the text-only Regular subset of the PHYSICS official test split (503 problems)~\citep{feng2025physics}. We evaluate three models in the 4--9B range: NVIDIA-Nemotron-3-Nano-4B-BF16, Qwen3.5-9B, and Ministral-3-8B-Reasoning-2512.

\paragraph{Generation and readout.}
For every problem, we draw eight Base trajectories at temperature 1 with top-$k=50$, using the benchmark prompt, the model's chat template, and a maximum generation length of 16,384 tokens. All multi-sample methods reuse this pool. \emph{Uniform} assigns equal mass to all trajectories, while \emph{Power} uses $\alpha=4$. \emph{SoftSat} uses one configuration across all model--benchmark pairs: $m=0.75$, $\beta=1$, $\eta=0.25$, and $\kappa=64$, giving clip bounds $[0.75,17]$. Power and SoftSat therefore denote deterministic weighted readouts of the same Base pool, rather than fresh samples from either target distribution.

We use a readout matched to each response space. BigCodeBench uses ModeX~\citep{choi2026modex} followed by the official execution evaluator. LiveAoPSBench and PHYSICS group equivalent extracted answers and select the group with greatest uniform or weighted mass; scoring follows their official equivalence procedures. We report each benchmark's official score and do not average scores across benchmarks.

\subsection{Main Results}

\begin{table*}[t]
\centering
\small
\setlength{\tabcolsep}{5pt}
\caption{Official benchmark scores (\%) with eight Base trajectories per problem. \emph{Single} uses the first sampled trajectory; the remaining columns apply task-specific consensus to the same pool. $\Delta$ is SoftSat minus Uniform. Bold marks the best multi-sample result in each row.}
\label{tab:main-results}
\begin{tabular}{@{}llrrrrr@{}}
\toprule
Benchmark & Model & Single & Uniform & Power ($\alpha{=}4$) & SoftSat ($\beta{=}1$) & $\Delta$ \\
\midrule
BigCodeBench & Nemotron-4B & 17.456 & 24.386 & 22.544 & \textbf{24.561} & +0.175 \\
             & Qwen3.5-9B  & 34.386 & 36.579 & \textbf{38.333} & 37.719 & +1.140 \\
             & Ministral-8B & 29.123 & 33.421 & 34.825 & \textbf{34.912} & +1.491 \\
\midrule
LiveAoPSBench & Nemotron-4B & 21.486 & \textbf{34.137} & 15.663 & 33.133 & $-1.004$ \\
              & Qwen3.5-9B  & 59.237 & \textbf{64.659} & 62.851 & \textbf{64.659} & 0.000 \\
              & Ministral-8B & 44.177 & 50.201 & 46.386 & \textbf{51.606} & +1.405 \\
\midrule
PHYSICS & Nemotron-4B & 18.869 & \textbf{23.891} & 16.892 & 23.636 & $-0.255$ \\
        & Qwen3.5-9B  & 69.422 & \textbf{72.913} & 70.979 & 72.822 & $-0.092$ \\
        & Ministral-8B & 35.957 & 43.164 & 39.332 & \textbf{43.686} & +0.522 \\
\bottomrule
\end{tabular}
\end{table*}

Table~\ref{tab:main-results} shows that Power is not a reliable alternative to uniform consensus. It underperforms Uniform in all six LiveAoPSBench and PHYSICS settings, by 1.808--18.474 points on LiveAoPSBench and 1.934--6.999 points on PHYSICS. On BigCodeBench, in contrast, Power improves Qwen3.5 and Ministral but degrades Nemotron. Its effect therefore varies substantially across response spaces and models.

SoftSat is substantially more stable. Relative to Uniform, it improves five of the nine settings, ties one, and has a largest regression of 1.004 points. On the two largest Power failures, both with Nemotron, the gaps to Uniform fall from 18.474 to 1.004 points on LiveAoPSBench and from 6.999 to 0.255 points on PHYSICS. SoftSat does not uniformly outperform Uniform; the main observation is instead that it removes Power's large regressions while retaining gains in five settings.

\subsection{Budget-Dependent Strength}

The appropriate sharpening strength changes with the number of available trajectories. Figure~\ref{fig:budget-beta} reports a budget sweep for Ministral on BigCodeBench. The best $\beta$ rises from $0.25$ at $N=4$ to $2$ at $N=32$, while SoftSat improves over uniform SC at each evaluated budget. Thus, a larger candidate pool can support a stronger reweighting rule, but the resulting accuracy must still be read against the corresponding uniform-consensus baseline.

\begin{figure*}[t]
    \centering
    \includegraphics[width=0.93\textwidth]{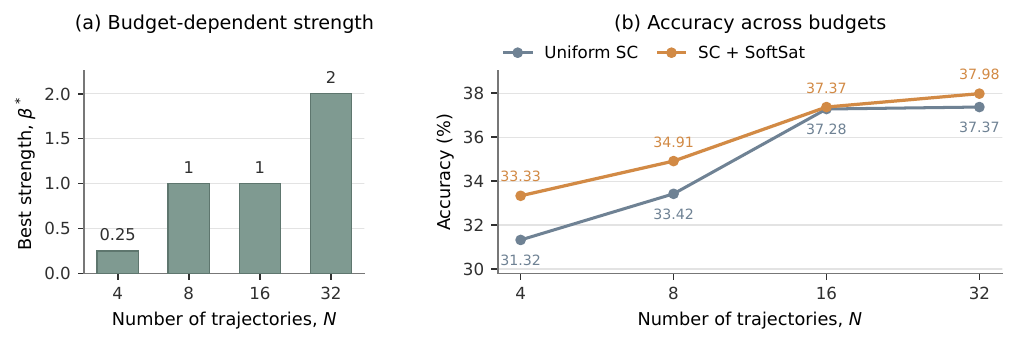}
    \caption{Budget-dependent SoftSat behavior for Ministral on BigCodeBench. (a) The best sharpening strength $\beta^*$ for each trajectory budget $N$. (b) Accuracy of uniform $\mathrm{SC}_N$ and SoftSat-weighted $\mathrm{SC}_N$ at the corresponding $\beta^*$. Results for all three benchmarks appear in the appendix.}
    \label{fig:budget-beta}
\end{figure*}

\subsection{Repairing the Diagnosed Mismatches}

The accuracy gains in Table~\ref{tab:main-results} do not by themselves show that SoftSat addresses the failure identified in Section~\ref{sec:mismatch}. Figure~\ref{fig:mismatch-repair} therefore returns to the same representative setting as Figure~\ref{fig:mismatch-overview} and overlays Base, Power, and SoftSat. At the answer level, Power moves the across-problem center from $(C,G)=(0.433,0.132)$ to $(0.464,-0.072)$, whereas SoftSat gives $(0.439,0.133)$ and retains a coverage profile close to Base. Within the correct set, Power shifts conditional mass toward the highest-Base-likelihood trajectories, while SoftSat remains much closer to the Base profile.

\begin{figure*}[t]
    \centering
    \includegraphics[width=0.88\textwidth]{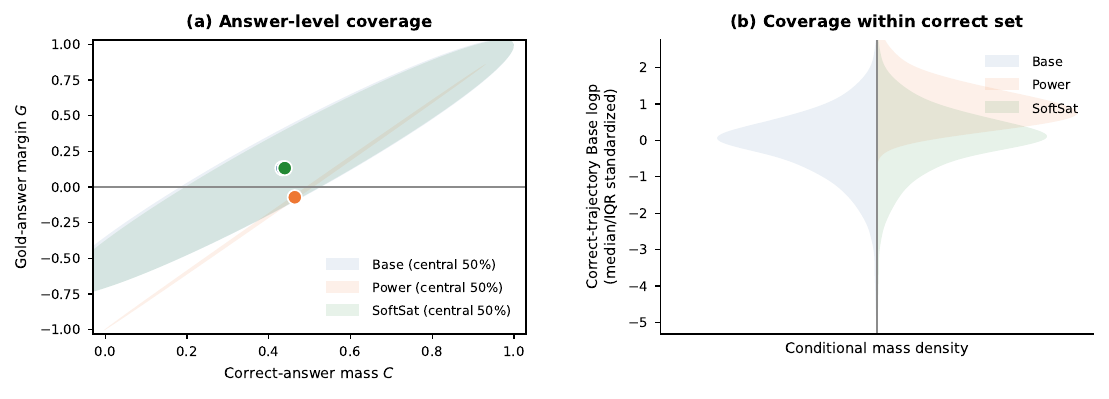}
    \caption{SoftSat repairs the coverage distortions in the representative setting from Figure~\ref{fig:mismatch-overview}. Each panel overlays Base, Power, and SoftSat on the same candidate pool. (a) SoftSat restores the gold-answer margin without discarding correct-answer mass. (b) SoftSat avoids Power's concentration on a narrow high-likelihood portion of the correct set. Full three-model diagnostics appear in the appendix.}
    \label{fig:mismatch-repair}
\end{figure*}

We do not repeat a dose panel for SoftSat. Its gain is defined on a bounded relative-rank coordinate, so fixed hyperparameters impose the same gain curve across problems, up to likelihood ties. The empirical question left for Figure~\ref{fig:mismatch-repair} is therefore whether this controlled dose preserves useful coverage.

\subsection{Checking the Weighted Realization}

On all 1,118 human-verified MoreHopQA Case-5 examples~\citep{schnitzler2024morehopqa}, repeated Power-SMC sampling and its weighted realization select the same normalized answer in 99.55\% of cases. Their accuracies differ by $0.089$ percentage points, with a paired-bootstrap 95\% confidence interval of $[-0.268,0.447]$, while mean inference time falls by $7.42\times$. Appendix~\ref{app:weighted-realization-check} gives the full protocol and distributional diagnostics. This check supports the weighted realization in this finite setting; it is not a claim of universal equivalence or a test of the asymptotic convergence rate.

\section{Conclusion}

We studied why sequence-level Power Sampling can improve the probability of correct trajectories yet degrade decisions formed from multiple samples. Our analysis identified two sources of this failure: coverage mismatch concentrates probability mass on a narrow set of dominant trajectories and answers, while dose mismatch makes a fixed exponent induce sharply different deformations across problems. This diagnosis led to Relative-Rank SoftSat, which replaces absolute likelihood with a problem-relative coordinate and saturates the gain assigned to high-probability trajectories. We also connected sampling from an ideal Power target to weighted consensus over a Base candidate pool, providing a practical way to combine sequence-level targets with multi-sample inference. Across models and reasoning benchmarks, standard Power produced large and inconsistent regressions, whereas SoftSat removed these failures while retaining gains in several settings; a finite-sample check further showed close agreement between repeated Power-SMC sampling and its weighted realization. More broadly, our results show that a sharpened distribution should be judged not only by the quality of individual trajectories or pass@$k$, but also by whether it preserves the support structure required by the downstream decision rule.

\ifdefined\ARXIVVERSION
\else
\subsection*{AI use statement}

Generative AI tools were used to assist with manuscript formatting, migration to the ICLR 2027 template, language editing, and verification and formatting of bibliographic metadata. The authors reviewed the AI-assisted changes against the underlying code, experimental records, and cited sources. No AI-generated output was treated as experimental evidence, and the authors take responsibility for the final content of this work.

\subsection*{Reproducibility statement}

Sections~\ref{sec:preliminaries}--\ref{sec:method} specify the target distributions, finite-pool weighting rules, and algorithms used in this work. The experimental setup, trajectory budgets, hyperparameters, dataset splits, and model identifiers are documented in the experiments, while the appendix provides complete proofs and extended diagnostic and budget-sweep results. The accompanying code artifact implements the finite-pool weighting and consensus readout used in the reported analyses.
\fi

\bibliography{references}

@misc{kaplan2020scaling,
  author        = {Kaplan, Jared and McCandlish, Sam and Henighan, Tom and Brown, Tom B. and Chess, Benjamin and Child, Rewon and Gray, Scott and Radford, Alec and Wu, Jeffrey and Amodei, Dario},
  title         = {Scaling Laws for Neural Language Models},
  year          = {2020},
  eprint        = {2001.08361},
  archivePrefix = {arXiv}
}

@inproceedings{lewkowycz2022quantitative,
  author    = {Lewkowycz, Aitor and Andreassen, Anders and Dohan, David and Dyer, Ethan and Michalewski, Henryk and Ramasesh, Vinay and Slone, Ambrose and Anil, Cem and Schlag, Imanol and Gutman-Solo, Theo and Wu, Yuhuai and Neyshabur, Behnam and Gur-Ari, Guy and Misra, Vedant},
  title     = {Solving Quantitative Reasoning Problems with Language Models},
  booktitle = {Advances in Neural Information Processing Systems},
  volume    = {35},
  year      = {2022}
}

@misc{chen2021codex,
  author        = {Chen, Mark and others},
  title         = {Evaluating Large Language Models Trained on Code},
  year          = {2021},
  eprint        = {2107.03374},
  archivePrefix = {arXiv}
}

@inproceedings{snell2025scaling,
  author    = {Snell, Charlie and Lee, Jaehoon and Xu, Kelvin and Kumar, Aviral},
  title     = {Scaling {LLM} Test-Time Compute Optimally Can Be More Effective Than Scaling Parameters for Reasoning},
  booktitle = {International Conference on Learning Representations},
  year      = {2025}
}

@inproceedings{wu2025inference,
  author    = {Wu, Yangzhen and Sun, Zhiqing and Li, Shanda and Welleck, Sean and Yang, Yiming},
  title     = {Inference Scaling Laws: An Empirical Analysis of Compute-Optimal Inference for {LLM} Problem-Solving},
  booktitle = {International Conference on Learning Representations},
  year      = {2025}
}

@inproceedings{huang2025sharpening,
  author    = {Huang, Audrey and Block, Adam and Foster, Dylan and Rohatgi, Dhruv and Zhang, Cyril and Simchowitz, Max and Ash, Jordan T. and Krishnamurthy, Akshay},
  title     = {Self-Improvement in Language Models: The Sharpening Mechanism},
  booktitle = {International Conference on Learning Representations},
  year      = {2025}
}

@inproceedings{wang2023selfconsistency,
  author    = {Wang, Xuezhi and Wei, Jason and Schuurmans, Dale and Le, Quoc V. and Chi, Ed H. and Narang, Sharan and Chowdhery, Aakanksha and Zhou, Denny},
  title     = {Self-Consistency Improves Chain of Thought Reasoning in Language Models},
  booktitle = {International Conference on Learning Representations},
  year      = {2023}
}

@inproceedings{yue2025reasoningcapacity,
  author    = {Yue, Yang and Chen, Zhiqi and Lu, Rui and Zhao, Andrew and Wang, Zhaokai and Yue, Yang and Song, Shiji and Huang, Gao},
  title     = {Does Reinforcement Learning Really Incentivize Reasoning Capacity in {LLM}s Beyond the Base Model?},
  booktitle = {Advances in Neural Information Processing Systems},
  year      = {2025}
}

@inproceedings{tan2025selfconsistent,
  author    = {Tan, Hexiang and Sun, Fei and Liu, Sha and Su, Du and Cao, Qi and Chen, Xin and Wang, Jingang and Cai, Xunliang and Wang, Yuanzhuo and Shen, Huawei and Cheng, Xueqi},
  title     = {Too Consistent to Detect: A Study of Self-Consistent Errors in {LLM}s},
  booktitle = {Proceedings of the 2025 Conference on Empirical Methods in Natural Language Processing},
  pages     = {4755--4765},
  year      = {2025},
  doi       = {10.18653/v1/2025.emnlp-main.238}
}

@inproceedings{li2025dynamic,
  author    = {Li, Yiwei and Zhang, Ji and Feng, Shaoxiong and Yuan, Peiwen and Wang, Xinglin and Shi, Jiayi and Zhang, Yueqi and Tan, Chuyi and Pan, Boyuan and Hu, Yao and Li, Kan},
  title     = {Revisiting Self-Consistency from Dynamic Distributional Alignment Perspective on Answer Aggregation},
  booktitle = {Findings of the Association for Computational Linguistics: {ACL} 2025},
  pages     = {25208--25223},
  year      = {2025},
  doi       = {10.18653/v1/2025.findings-acl.1293}
}

@inproceedings{wan2025rasc,
  author    = {Wan, Guangya and Wu, Yuqi and Chen, Jie and Li, Sheng},
  title     = {Reasoning Aware Self-Consistency: Leveraging Reasoning Paths for Efficient {LLM} Sampling},
  booktitle = {Proceedings of the 2025 Conference of the Nations of the Americas Chapter of the Association for Computational Linguistics: Human Language Technologies},
  pages     = {3613--3635},
  year      = {2025},
  doi       = {10.18653/v1/2025.naacl-long.184}
}

@inproceedings{karan2026reasoning,
  author    = {Karan, Aayush and Du, Yilun},
  title     = {Reasoning with Sampling: Your Base Model Is Smarter Than You Think},
  booktitle = {International Conference on Learning Representations},
  year      = {2026}
}

@inproceedings{ji2026scalable,
  author    = {Ji, Xiaotong and Tutunov, Rasul and Zimmer, Matthieu and Bou Ammar, Haitham},
  title     = {Scalable Power Sampling: Unlocking Efficient, Training-Free Reasoning for {LLM}s via Distribution Sharpening},
  booktitle = {International Conference on Machine Learning},
  year      = {2026}
}

@misc{azizi2026powersmc,
  author        = {Azizi, Seyedarmin and Baghaei Potraghloo, Erfan and Ahmadi, Minoo and Kundu, Souvik and Pedram, Massoud},
  title         = {{Power-SMC}: Low-Latency Sequence-Level Power Sampling for Training-Free {LLM} Reasoning},
  year          = {2026},
  eprint        = {2602.10273},
  archivePrefix = {arXiv}
}

@inproceedings{chen2023codet,
  author    = {Chen, Bei and Zhang, Fengji and Nguyen, Anh and Zan, Daoguang and Lin, Zeqi and Lou, Jian-Guang and Chen, Weizhu},
  title     = {{CodeT}: Code Generation with Generated Tests},
  booktitle = {International Conference on Learning Representations},
  year      = {2023}
}

@misc{chen2023universal,
  author        = {Chen, Xinyun and Aksitov, Renat and Alon, Uri and Ren, Jie and Xiao, Kefan and Yin, Pengcheng and Prakash, Sushant and Sutton, Charles and Wang, Xuezhi and Zhou, Denny},
  title         = {Universal Self-Consistency for Large Language Model Generation},
  year          = {2023},
  eprint        = {2311.17311},
  archivePrefix = {arXiv}
}

@inproceedings{choi2026modex,
  author    = {Choi, Hyeong Kyu and Li, Sharon},
  title     = {{ModeX}: Evaluator-Free Best-of-N Selection for Open-Ended Generation},
  booktitle = {Proceedings of the 64th Annual Meeting of the Association for Computational Linguistics},
  year      = {2026}
}

@inproceedings{taubenfeld2025confidence,
  author    = {Taubenfeld, Amir and Sheffer, Tom and Ofek, Eran and Feder, Amir and Goldstein, Ariel and Gekhman, Zorik and Yona, Gal},
  title     = {Confidence Improves Self-Consistency in {LLM}s},
  booktitle = {Findings of the Association for Computational Linguistics: {ACL} 2025},
  year      = {2025}
}

@inproceedings{zhou2025bridging,
  author    = {Zhou, Zhi and Tan, Yuhao and Li, Zenan and Yao, Yuan and Guo, Lan-Zhe and Li, Yu-Feng and Ma, Xiaoxing},
  title     = {A Theoretical Study on Bridging Internal Probability and Self-Consistency for {LLM} Reasoning},
  booktitle = {Advances in Neural Information Processing Systems},
  year      = {2025}
}

@inproceedings{chen2026modelswitch,
  author    = {Chen, Jianhao and Xun, Zishuo and Zhou, Bocheng and Qi, Han and Zhang, Hangfan and Zhang, Qiaosheng and Chen, Yang and Hu, Wei and Qu, Yuzhong and Hu, Shuyue},
  title     = {Do We Truly Need So Many Samples? Multi-{LLM} Repeated Sampling Efficiently Scales Test-Time Compute},
  booktitle = {Proceedings of the AAAI Conference on Artificial Intelligence},
  volume    = {40},
  pages     = {20083--20091},
  year      = {2026},
  doi       = {10.1609/aaai.v40i24.39094}
}

@inproceedings{oh2026latent,
  author    = {Oh, Jungsuk and Lee, Jay-Yoon},
  title     = {Latent Self-Consistency for Reliable Majority-Set Selection in Short- and Long-Answer Reasoning},
  booktitle = {Proceedings of the AAAI Conference on Artificial Intelligence},
  volume    = {40},
  pages     = {32591--32599},
  year      = {2026},
  doi       = {10.1609/aaai.v40i38.40536}
}

@inproceedings{wang2026optscale,
  author    = {Wang, Youkang and Wang, Jian and Chen, Rubing and Wei, Xiao-Yong},
  title     = {{OptScale}: Probabilistic Optimality for Inference-Time Scaling},
  booktitle = {Proceedings of the AAAI Conference on Artificial Intelligence},
  volume    = {40},
  pages     = {33710--33718},
  year      = {2026},
  doi       = {10.1609/aaai.v40i40.40661}
}

@inproceedings{khanh2026diverse,
  author    = {Khanh, Ly Tran Ho and Zhu, Dongxuan and Yue, Man-Chung and Nguyen, Viet Anh},
  title     = {Test-Time Diverse Reasoning by Riemannian Activation Steering},
  booktitle = {Proceedings of the AAAI Conference on Artificial Intelligence},
  volume    = {40},
  pages     = {31429--31437},
  year      = {2026},
  doi       = {10.1609/aaai.v40i37.40407}
}

@inproceedings{zhuo2025bigcodebench,
  author    = {Zhuo, Terry Yue and Vu, Minh Chien and Chim, Jenny and Hu, Han and Yu, Wenhao and Widyasari, Ratnadira and Yusuf, Imam Nur Bani and Zhan, Haolan and He, Junda and Paul, Indraneil and Brunner, Simon and Gong, Chen and Hoang, James and Zebaze, Armel and Hong, Xiaoheng and Li, Wen-Ding and Kaddour, Jean and Xu, Ming and Zhang, Zhihan and Yadav, Prateek and Jain, Naman and Gu, Alex and Cheng, Zhoujun and Liu, Jiawei and Liu, Qian and Wang, Zijian and Hui, Binyuan and Muennighoff, Niklas and Lo, David and Fried, Daniel and Du, Xiaoning and de Vries, Harm and von Werra, Leandro},
  title     = {{BigCodeBench}: Benchmarking Code Generation with Diverse Function Calls and Complex Instructions},
  booktitle = {International Conference on Learning Representations},
  year      = {2025},
  url       = {https://openreview.net/forum?id=YrycTjllL0}
}

@inproceedings{mahdavi2025liveaops,
  author    = {Mahdavi, Sadegh and Li, Muchen and Liu, Kaiwen and Thrampoulidis, Christos and Sigal, Leonid and Liao, Renjie},
  title     = {Leveraging Online Olympiad-Level Math Problems for {LLM}s Training and Contamination-Resistant Evaluation},
  booktitle = {Proceedings of the 42nd International Conference on Machine Learning},
  series    = {Proceedings of Machine Learning Research},
  volume    = {267},
  year      = {2025},
  url       = {https://proceedings.mlr.press/v267/mahdavi25a.html}
}

@inproceedings{feng2025physics,
  author    = {Feng, Kaiyue and Zhao, Yilun and Liu, Yixin and Yang, Tianyu and Zhao, Chen and Sous, John and Cohan, Arman},
  title     = {{PHYSICS}: Benchmarking Foundation Models on University-Level Physics Problem Solving},
  booktitle = {Findings of the Association for Computational Linguistics: ACL 2025},
  pages     = {11717--11743},
  year      = {2025},
  doi       = {10.18653/v1/2025.findings-acl.610},
  url       = {https://aclanthology.org/2025.findings-acl.610/}
}

@article{schnitzler2024morehopqa,
  author        = {Schnitzler, Julian and Ho, Xanh and Huang, Jiahao and Boudin, Florian and Sugawara, Saku and Aizawa, Akiko},
  title         = {{MoreHopQA}: More Than Multi-hop Reasoning},
  journal       = {arXiv preprint arXiv:2406.13397},
  year          = {2024},
  doi           = {10.48550/arXiv.2406.13397},
  url           = {https://arxiv.org/abs/2406.13397}
}
\ifdefined\ARXIVVERSION
\bibliographystyle{plainnat}
\else
\bibliographystyle{iclr2027_conference}
\fi

\appendix

\section{Full Mismatch Diagnostics}
\label{app:full-mismatch-diagnostics}

Figure~\ref{fig:full-power-mismatch} extends the three diagnostics in Figure~\ref{fig:mismatch-overview} to every evaluated model on LiveAoPSBench. Figure~\ref{fig:full-softsat-repair} similarly extends the coverage comparison in Figure~\ref{fig:mismatch-repair}.

\begin{figure*}[t]
    \centering
    \includegraphics[width=0.98\textwidth]{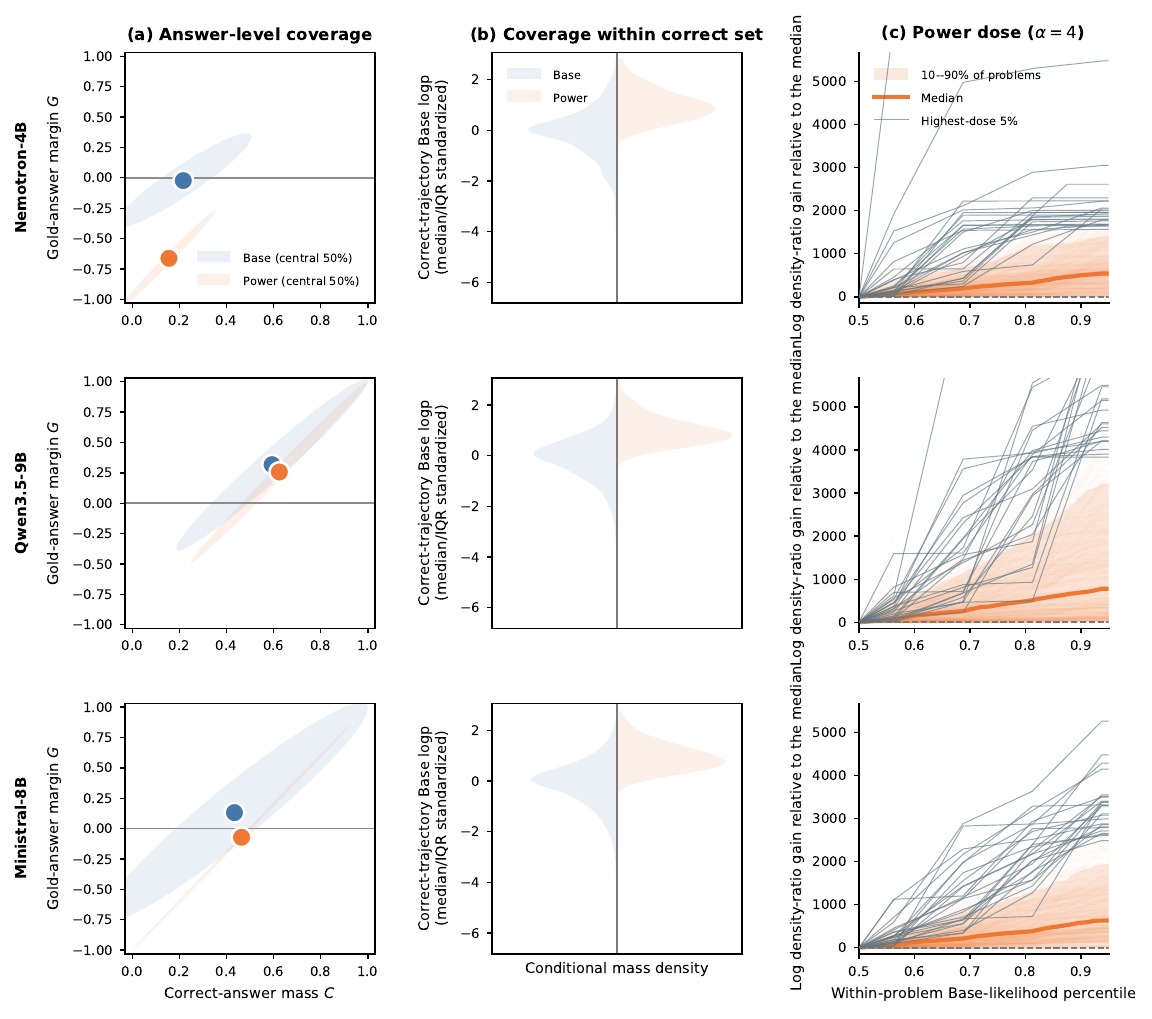}
    \caption{Full Power mismatch diagnostics on LiveAoPSBench. Rows correspond to Nemotron-4B, Qwen3.5-9B, and Ministral-8B; columns follow Figure~\ref{fig:mismatch-overview}.}
    \label{fig:full-power-mismatch}
\end{figure*}

\begin{figure*}[t]
    \centering
    \includegraphics[width=0.94\textwidth]{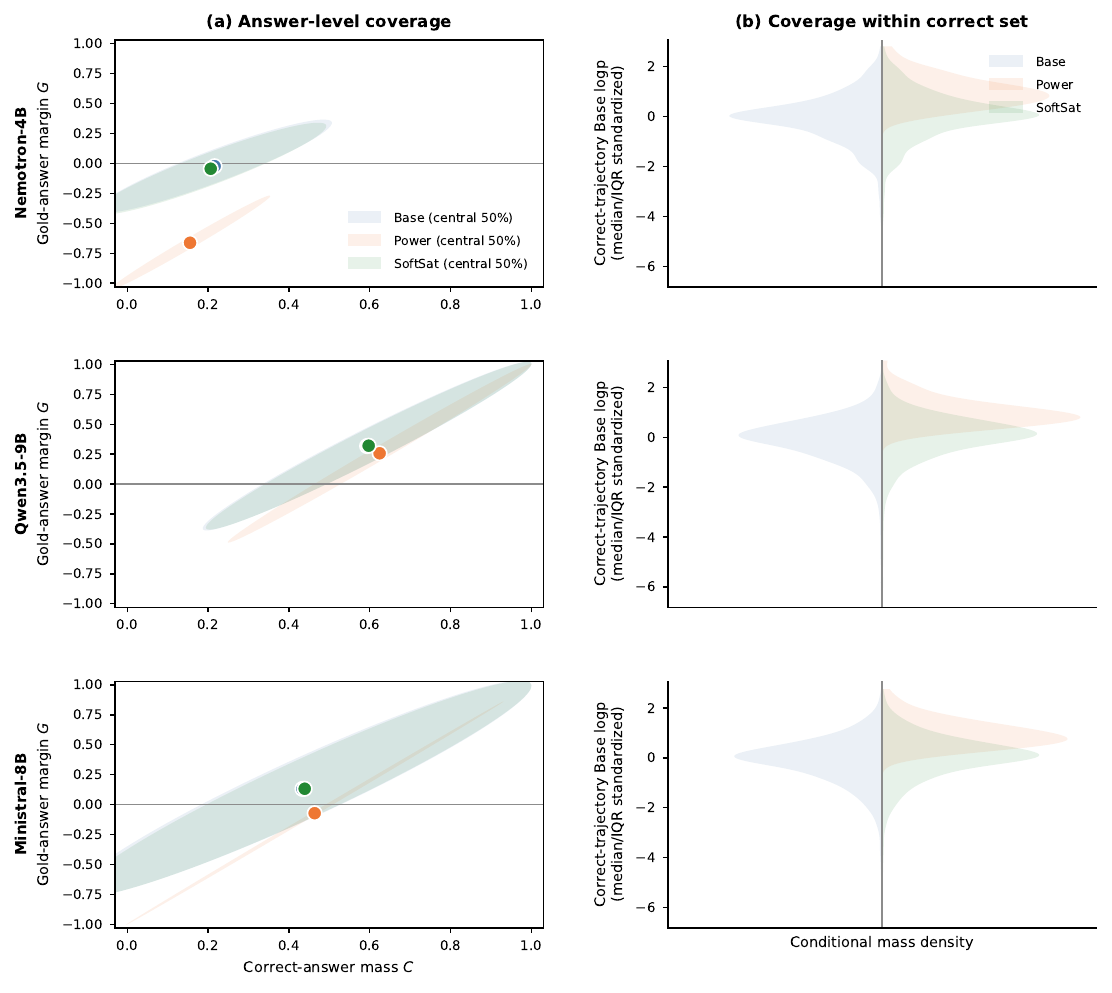}
    \caption{Full coverage comparison among Base, Power, and SoftSat on LiveAoPSBench. Rows correspond to Nemotron-4B, Qwen3.5-9B, and Ministral-8B; columns follow Figure~\ref{fig:mismatch-repair}.}
    \label{fig:full-softsat-repair}
\end{figure*}

\section{Budget Sweep Across Benchmarks}
\label{app:budget-sweep}

Figure~\ref{fig:full-budget-beta} extends Figure~\ref{fig:budget-beta} to the three evaluated benchmarks.

\begin{figure*}[t]
    \centering
    \includegraphics[width=0.99\textwidth]{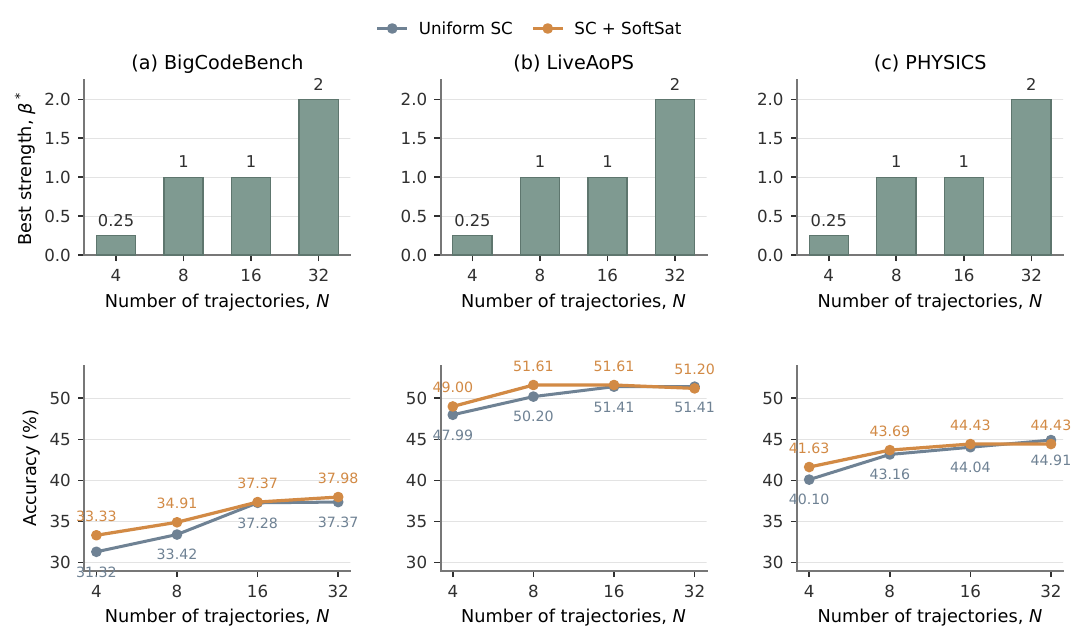}
    \caption{Budget-dependent SoftSat behavior for Ministral across BigCodeBench, LiveAoPSBench, and PHYSICS. Within each benchmark, the upper panel shows the best $\beta$ at each budget and the lower panel compares uniform $\mathrm{SC}_N$ with SoftSat-weighted $\mathrm{SC}_N$.}
    \label{fig:full-budget-beta}
\end{figure*}

\section{Finite-Sample Check of the Weighted Realization}
\label{app:weighted-realization-check}

The preceding lemmas motivate a weighted realization of target answer mass, but they do not imply identical decisions for a finite particle system. We therefore perform a narrow check on all 1,118 human-verified MoreHopQA Case-5 examples~\citep{schnitzler2024morehopqa}. With Qwen3.5-9B and $\alpha=4$, the sampling reference runs an $M=8$ particle Power-SMC system independently $N=8$ times and applies uniform SC to the eight returned answers. The weighted realization instead runs Power-SMC once and aggregates the answer mass of its $M=8$ terminal particles using their normalized weights.

\begin{figure*}[t]
    \centering
    \includegraphics[width=0.92\textwidth]{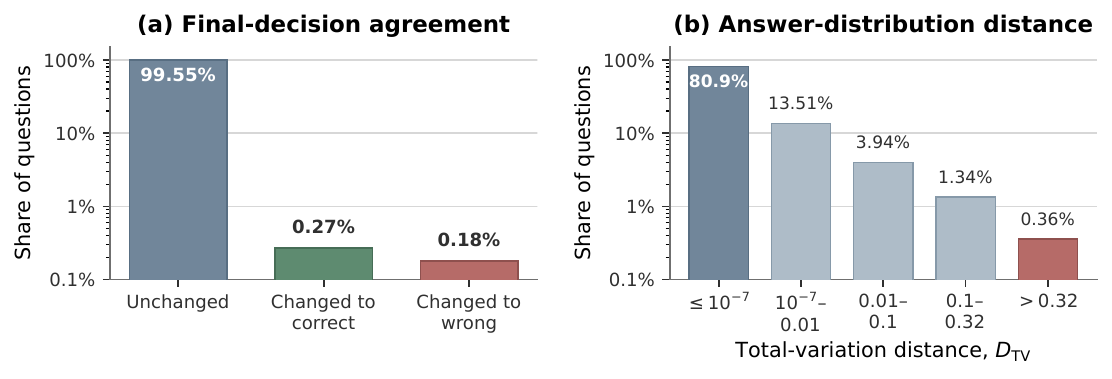}
    \caption{Finite-sample agreement between repeated Power-SMC sampling and its weighted realization on MoreHopQA Case-5. (a) The two procedures return the same normalized answer on 99.55\% of questions; the remaining changes are split by whether weighting corrects or introduces an error. (b) Most answer distributions are also close in total variation distance (DTV), although a small tail remains.}
    \label{fig:powersmc-weighted-consistency}
\end{figure*}

Figure~\ref{fig:powersmc-weighted-consistency} shows that 1,113 of 1,118 final decisions are unchanged. Weighting corrects three errors and introduces two, yielding 91.14\% accuracy versus 91.06\% for repeated sampling, a difference of $0.089$ percentage points with a paired-bootstrap 95\% confidence interval of $[-0.268,0.447]$. The underlying answer distributions agree nearly as closely: 80.86\% have DTV at most $10^{-7}$ and 94.36\% have DTV at most $0.01$, while only 0.36\% exceed $10^{-0.5}$. Mean inference time falls from 19.50 to 2.63 seconds per question, a $7.42\times$ reduction.

\section{Proofs for Weighted Target Approximation}
\label{app:weighted-proofs}

\begin{proof}[Proof of Lemma~\ref{lem:power-expectation-equivalence}]
Fix an answer $a$ and write $f_a(\tau)=\mathbf{1}\{A(\tau)=a\}$. Define
\begin{equation}
\overline U_N=\frac{1}{N}\sum_{i=1}^{N}r_\alpha(X_i)f_a(X_i),
\qquad
\overline R_N=\frac{1}{N}\sum_{i=1}^{N}r_\alpha(X_i).
\end{equation}
Since $q_\alpha(\tau)=p(\tau)r_\alpha(\tau)/Z_\alpha$,
\begin{equation}
\mathbb{E}_p[r_\alpha(X)f_a(X)]
=Z_\alpha Q_{q_\alpha}(a),
\qquad
\mathbb{E}_p[r_\alpha(X)]=Z_\alpha.
\end{equation}
The strong law of large numbers therefore gives
$\overline U_N\to Z_\alpha Q_{q_\alpha}(a)$ and
$\overline R_N\to Z_\alpha$ almost surely. Because $Z_\alpha>0$, their ratio converges almost surely to $Q_{q_\alpha}(a)$. Moreover,
$0\leq\overline U_N/\overline R_N\leq1$, so bounded convergence yields
\begin{equation}
\lim_{N\to\infty}\mathbb{E}\!\left[
\widehat Q_N^{\mathrm{weight}}(a)
\right]=Q_{q_\alpha}(a).
\end{equation}
For target samples, $f_a(Y_i)$ is Bernoulli with mean $Q_{q_\alpha}(a)$. Hence
$\mathbb{E}[\widehat Q_N^{\mathrm{sample}}(a)]=Q_{q_\alpha}(a)$ for every $N$, proving the result.
\end{proof}

\begin{proof}[Proof of Lemma~\ref{lem:power-estimation-error}]
Let $\rho(\tau)=q_\alpha(\tau)/p(\tau)$ and $\theta=Q_{q_\alpha}(a)$. Multiplying all weights by the same positive constant does not change the self-normalized estimator, so
\begin{equation}
\begin{aligned}
\widehat Q_N^{\mathrm{weight}}(a)-\theta
&=\frac{\overline H_N}{\overline\rho_N},\\
\overline H_N
&=\frac{1}{N}\sum_{i=1}^{N}\rho(X_i)(f_a(X_i)-\theta),\\
\overline\rho_N
&=\frac{1}{N}\sum_{i=1}^{N}\rho(X_i).
\end{aligned}
\label{eq:proof-self-normalized-error}
\end{equation}
Here $\mathbb{E}_p[\rho(X)]=1$ and
$\mathbb{E}_p[\rho(X)(f_a(X)-\theta)]=0$. Put
$\sigma_a^2=\operatorname{Var}_p[\rho(X)(f_a(X)-\theta)]$, which is finite because $0<\rho\leq C$.

Consider the event $E_N=\{\overline\rho_N\geq1/2\}$. On $E_N$, \eqref{eq:proof-self-normalized-error} implies
\begin{equation}
\left(\widehat Q_N^{\mathrm{weight}}(a)-\theta\right)^2
\leq4\overline H_N^2,
\qquad
\mathbb{E}[\overline H_N^2]=\frac{\sigma_a^2}{N}.
\end{equation}
On $E_N^c$, the squared error is at most one. Since $0\leq\rho(X)\leq C$ and $\mathbb{E}[\rho(X)]=1$, Hoeffding's inequality gives
\begin{equation}
\Pr(E_N^c)
=\Pr(\overline\rho_N-1<-1/2)
\leq\exp\!\left(-\frac{N}{2C^2}\right).
\end{equation}
Combining the two events,
\begin{equation}
\mathbb{E}\!\left[
\left(\widehat Q_N^{\mathrm{weight}}(a)-\theta\right)^2
\right]
\leq\frac{4\sigma_a^2}{N}
+\exp\!\left(-\frac{N}{2C^2}\right).
\label{eq:weighted-mse-bound}
\end{equation}

The target-sampling estimator is unbiased with variance
$\theta(1-\theta)/N$. It is independent of the Base-pool estimator, so the cross term between their centered errors vanishes. Writing
$D_N(a)=\widehat Q_N^{\mathrm{weight}}(a)-\widehat Q_N^{\mathrm{sample}}(a)$,
\begin{equation}
\begin{aligned}
\mathbb{E}[D_N(a)^2]
&\leq
\frac{4\sigma_a^2+\theta(1-\theta)}{N}
+\exp\!\left(-\frac{N}{2C^2}\right)\\
&=O(N^{-1}).
\end{aligned}
\end{equation}
The $O_p(N^{-1/2})$ statement follows immediately from this second-moment bound and Markov's inequality.
\end{proof}

Finally, suppose the answer space is finite and $a^\star$ is the unique mode of $Q_{q_\alpha}$ with margin
$\delta=Q_{q_\alpha}(a^\star)-\max_{a\neq a^\star}Q_{q_\alpha}(a)>0$.
Both estimators converge in probability to $Q_{q_\alpha}(a)$ for every answer. A union bound over the finite answer space shows that, with probability tending to one, all estimated masses are within $\delta/3$ of their targets. On this event, both estimators select $a^\star$, establishing the mode-agreement consequence stated in the main text.

\end{document}